\documentclass{article}

\usepackage{tikz}
\usepackage{microtype}
\usepackage{svg}
\usepackage{graphicx}
\usepackage{adjustbox}
\usepackage{subcaption}
\usepackage{booktabs} 
\usepackage{multirow}
\usepackage{stmaryrd}
\usepackage{hyperref}
\usepackage{tabularx}

\usepackage{enumitem}
\usepackage{wrapfig}

\usepackage{subcaption}

\newcommand{\method}{WeDAS}
\newcommand{\score}{QRAS}

 \usepackage[preprint]{icml2026}

\usepackage{amsmath}
\usepackage{amssymb}
\usepackage{mathtools}
\usepackage{amsthm}

\usepackage[capitalize,noabbrev]{cleveref}

\theoremstyle{plain}
\newtheorem{theorem}{Theorem}[section]
\newtheorem{proposition}[theorem]{Proposition}

\theoremstyle{definition}

\newtheorem{assumption}[theorem]{Assumption}
\theoremstyle{remark}

\usepackage[textsize=tiny]{todonotes}

\icmltitlerunning{Submission and Formatting Instructions for ICML 2026}

\begin{document}

\twocolumn[
  \icmltitle{Rethinking Deep Research from the Perspective of \\ Web Content Distribution Matching}



  \icmlsetsymbol{equal}{*}

  \begin{icmlauthorlist}
    \icmlauthor{Zixuan Yu}{equal,sysu}
    \icmlauthor{Zhenheng Tang}{equal,hkust}
    \icmlauthor{Tongliang Liu}{sydney}
    \icmlauthor{Chengqi Zhang}{polyu}
    \icmlauthor{Xiaowen Chu}{hkustgz}
    \icmlauthor{Bo Han}{hkbu}
  \end{icmlauthorlist}

  \icmlaffiliation{sysu}{School of Computer science and engineering, Sun Yat-sen University}
  \icmlaffiliation{hkust}{CSE, The Hong Kong University
of Science and Technology}
  \icmlaffiliation{sydney}{Sydney AI Centre, The University of Sydney}
  \icmlaffiliation{polyu}{Department of Data Science and Artificial Intelligence, The Hong
Kong Polytechnic University}
  \icmlaffiliation{hkustgz}{DSA Thrust, The Hong Kong University of Science and Technology (GuangZhou)}
  \icmlaffiliation{hkbu}{TMLR Group,
Department of Computer Science, Hong Kong Baptist University}

  \icmlcorrespondingauthor{Bo Han}{bhanml@comp.hkbu.edu.hk}

  \icmlkeywords{Machine Learning, ICML}

  \vskip 0.3in
]



 \printAffiliationsAndNotice{\icmlEqualContribution}

\begin{abstract}
Despite the integration of search tools, Deep Search Agents often suffer from a misalignment between reasoning-driven queries and the underlying web indexing structures. Existing frameworks treat the search engine as a static utility, leading to queries that are either too coarse or too granular to retrieve precise evidence. We propose \method, a Web Content Distribution Aware framework that incorporates search-space structural characteristics into the agent’s observation space. Central to our method is the Query-Result Alignment Score, a metric quantifying the compatibility between agent intent and retrieval outcomes. To overcome the intractability of indexing the dynamic web, we introduce a few-shot probing mechanism that iteratively estimates this score via limited query accesses, allowing the agent to dynamically recalibrate sub-goals based on the local content landscape. As a plug-and-play module, \method \ consistently improves sub-goal completion and accuracy across four benchmarks, effectively bridging the gap between high-level reasoning and low-level retrieval.
\end{abstract}

\section{Introduction}
Continuous scaling and architectural refinements in Large Language Models (LLMs) have significantly catalyzed their cognitive and reasoning capabilities \cite{achiam2023gpt,kaplan2020scaling,guo2025deepseek}. By mastering complex logic and long-horizon planning, these models have transitioned from simple text processors to autonomous agents capable of Deep Search \cite{zhu2025large} and Deep Research \cite{huang2025deep}. Such systems leverage the latent reasoning power of LLMs to navigate vast information landscapes through iterative retrieval and high-level analytical thinking, marking a paradigm shift toward multi-faceted problem-solving in open-ended domains \cite{huang2025deep}.

Despite significant strides in agentic reasoning \cite{ke2025survey,kaplan2020scaling}, a profound discrepancy persists between the internal cognitive depth of Large Language Models (LLMs) and the operational precision of their information acquisition \cite{luo2025large}. While the central intelligence of these agents has matured, their proficiency in navigating the open web remains stifled by a persistent information-to-noise bottleneck \cite{xi2025survey}. We contend that the primary impediment to effective deep research is not the agent’s core reasoning capacity, but its sensory capability to retrieve the most relevant pages from web environments \cite{qian2025scent,liu2025advances}. This bottleneck originates from a structural misalignment between an agent’s linguistic intent and the latent information distribution indexed by search engines \cite{li2025query,zhu2025large}. Such an acquisition gap is further exacerbated by the agent's inability to perceive web content distribution, leading to a pervasive failure in query granularity calibration. While coarse queries trigger a deluge of irrelevant noise, hyper-specific queries result in retrieval sparsity \cite{li2025webthinker,huang2025deep}. 

To quantify the semantic coherence between generated search queries and retrieved results, we employ TF-IDF \cite{luhn1957statistical}, Jaccard Similarity \cite{jaccard1901etude}, and Normalized
Levenshtein Similarity \cite{lcvenshtcin1966binary}, to measure their correlation. As in Figure~\ref{fig:scores}, there is a distinct distributional shift in similarity metrics between successful and failed trajectories, confirming the structural misalignment discussed above. These trajectories are collected from MiroThinker-v1.0-30B \cite{miromind2025mirothinker} when solving GAIA \cite{mialon2023gaia} tasks, where a trajectory is labeled as successful if the final answer is correct and failed otherwise. Motivated by these challenges, we investigate the following core research question:

\begin{figure}[ht]
    \centering
    \includegraphics[width=\columnwidth]{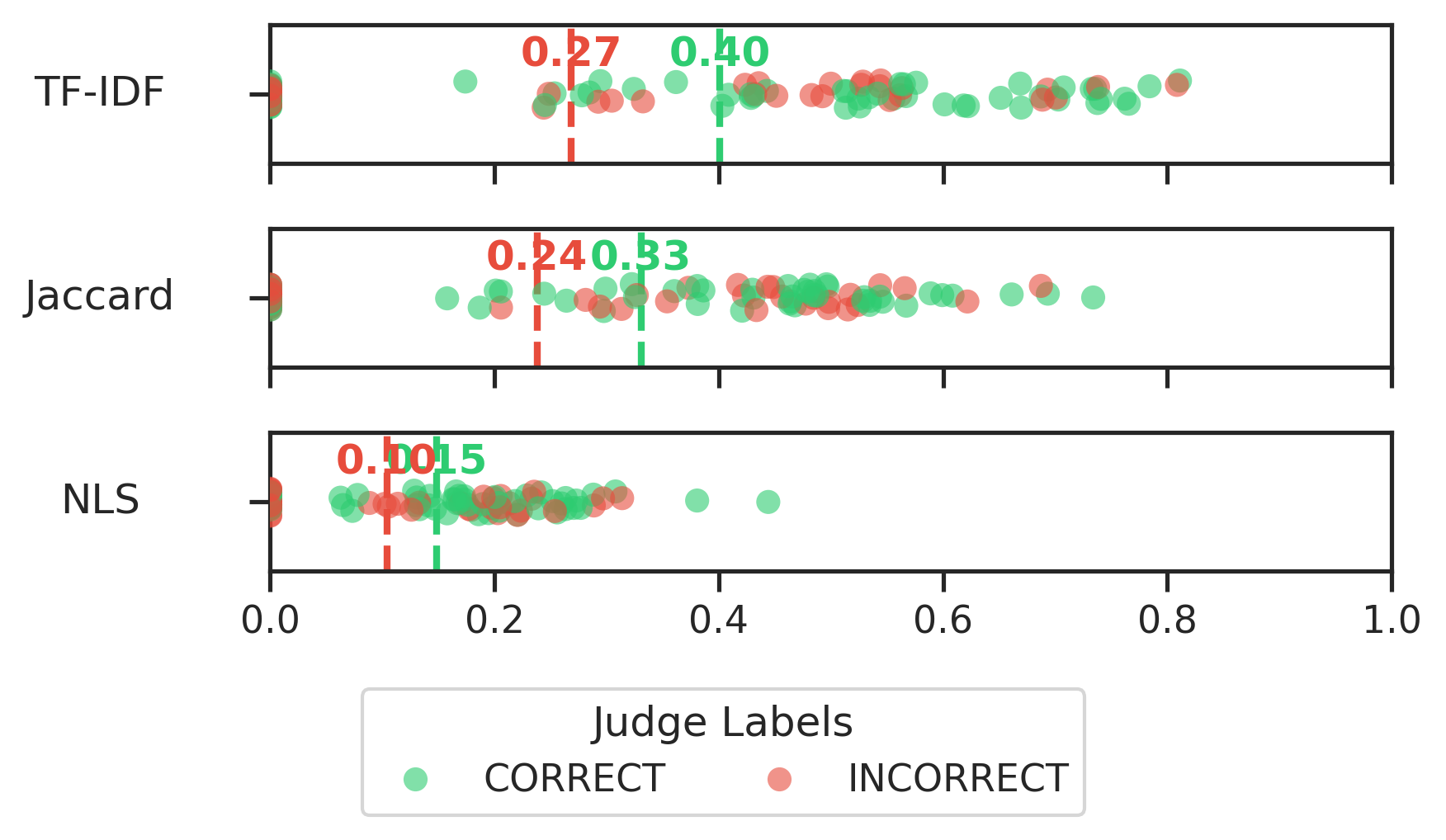} 
    \caption{Distributions of query--observation alignment metrics (TF-IDF, Jaccard, and normalized Levenshtein similarity) for successful vs. failed trajectories, highlighting the structural misalignment between agent-generated queries and retrieved web content.}
    \label{fig:scores}
\end{figure}

\textit{How can an autonomous agent perceive the distribution of web content to adaptively calibrate its search direction and query granularity?}

To address this, we introduce the Query-Result Alignment Score (\score), a metric designed to quantify the congruence between search queries and the prevailing information landscape. By decomposing the interface between queries and search outputs into three distinct dimensions, \score \ provides a granular assessment of retrieval efficacy. This metric allows agents to evaluate the searchability of their objectives and adaptively refine their trajectories based on empirical feedback.

Building on this metric, we propose Web Content Distribution Aware Search (\method), a framework that empowers agents with environmental awareness. By integrating \score, \method \ enables agents to anticipate the density and relevance of web content before finalizing their search paths. This foresight ensures that retrieved results are tightly coupled with research objectives, significantly increasing the signal-to-noise ratio in complex investigations.

The primary technical challenge to this approach is that the vast and unstructured nature of the internet precludes a priori knowledge of content distribution. We overcome this by introducing a Content Distribution Probing mechanism that samples the potential query space for a specific research sub-goal. This mechanism constructs a representative mapping of available information, providing the feedback necessary to navigate the trade-off between query breadth and specificity. By surfacing the underlying density of online data, our framework guides the agent to calibrate its search trajectories, ensuring that subsequent efforts are directed toward high-utility information regions while avoiding redundant or sparse subspaces. Our contributions are below: 
\begin{itemize}[leftmargin=*]
\item \noindent We formalize the \textbf{Query-Result Alignment Score} (\score) as a tractable metric, providing a granular metric to quantify search effectiveness (Section~\ref{sec:score}). 
\item \noindent We propose \textbf{Web Content Distribution Aware Search} (\method), a novel framework that utilizes iterative few-shot probing to map the latent information topography of the web (Section~\ref{sec:wcds}). 
\item \noindent Through experiments on open-ended research benchmarks, we demonstrate that \method \ enhances the information gain of search trajectories (Section~\ref{sec:probing}).
\end{itemize}

\begin{figure*}[t]
  \centering
  \includegraphics[width=0.8\textwidth]{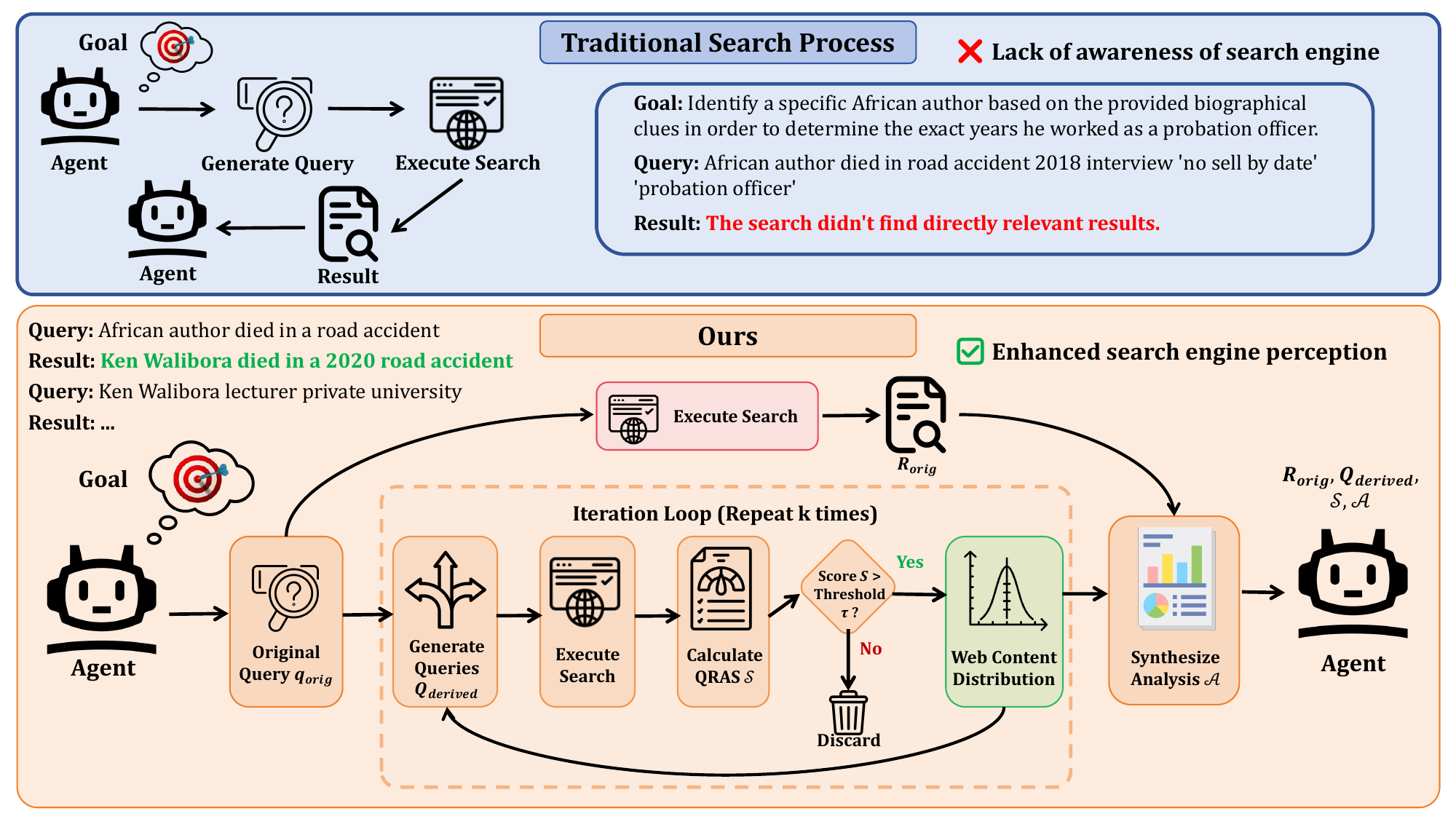}
  \vspace{-0.0cm}
  \caption{Framework of Web Content Distribution Aware Search (\method)}
  \label{fig:framework}
\end{figure*}
  \vspace{-0.0cm}

\section{Related Work}
\label{sec:related_work}
\vspace{-0.0cm}
\subsection{Search Engine}
\vspace{-0.0cm}
Traditional search engines have historically relied on lexical matching via statistical frameworks such as TF-IDF \cite{luhn1957statistical} and BM25 \cite{jones2000probabilistic}, which prioritize exact keyword overlap and term frequency \cite{zhu2025large}. These methods leverage inverted index structures to efficiently rank documents based on heuristic-based scoring \cite{li2025matching}. While computationally robust and highly scalable, these approaches are fundamentally constrained by the vocabulary mismatch problem \cite{li2025query}. Because they depend on surface-level term matching, they often fail to capture latent semantic relationships between queries and documents when they do not share identical terminology—a limitation that has prompted the transition toward neural and semantic-based retrieval paradigms \cite{thakur2021beir}.

\vspace{-0.0cm}
\subsection{Tool-Augmented Large Language Model}
\vspace{-0.0cm}

Tool-Augmented Large Language Models represent a significant shift from static generators to active problem-solvers, empowered to interact with external tools to overcome their inherent knowledge cutoff and hallucinations \cite{qu2025tool}. A core capability in this framework is the integration of web search engines, which allow LLMs to act as dynamic interfaces for real-world information acquisition \cite{zhu2025large}. Unlike traditional Retrieval-Augmented Generation (RAG) which often operates on pre-defined, local corpora \cite{huang2024survey}, search-augmented systems leverage the open web to retrieve up-to-date and long-tail knowledge.

The standard integration follows a rewrite-retrieve-read pipeline. Specifically, the model first employs query rewriting to optimize the user’s intent for search engine compatibility, executes the retrieval, and finally synthesizes the grounded response \cite{ma2023query}. However, these conventional methods are often constrained by a static, single-turn interaction schema that limits their effectiveness in handling complex, multi-step queries. Recent research is therefore pivoting toward agentic search paradigms and Deep Research architectures \cite{xi2025survey}. These systems, such as Search-R1 and WebThinker, utilize reinforcement learning and autonomous planning to perform iterative browsing and reasoning, enabling deeper information mining \cite{jin2025search,li2025webthinker}.

\vspace{-0.0cm}
\subsection{Deep Search Agent}
\vspace{-0.0cm}
Unlike traditional Tool-Augmented LLMs that typically rely on single-turn retrieval, Deep Search Agents possess the autonomy to comprehend complex user intent, plan multi-stage search strategies, and execute dynamic, multi-turn retrieval across diverse sources \cite{huang2025deep}. Current research categorizes the architectural paradigms of these agents into three distinct structures \cite{xi2025survey}:

\textbf{Parallel Structures}, which decompose intricate intents into independent sub-queries or diversify queries to cover multiple perspectives simultaneously \cite{zhao2025parallelsearch}. \textbf{Sequential Structures}, which utilize reflection-driven loops or proactive planning to adaptively refine search actions based on intermediate feedback and cumulative reasoning \cite{chen2025learning}. \textbf{Hybrid Structures}, such as tree-based or graph-based approaches, which synergize parallel exploration with sequential reasoning to optimize complex information-seeking trajectories \cite{chen2024mindsearch}.

Despite these advancements, existing methods predominantly focus on adapting models to fixed workflows while neglecting the evolution of the tool itself. Consequently, search engines often remain static components rather than becoming dynamic, co-reasoning partners in the agent’s iterative discovery process.

\vspace{-0.0cm}
\section{Preliminary}
\vspace{-0.0cm}
\subsection{Large Language Model}
\vspace{-0.0cm}
We formalize the Large Language Model (LLM) within the agentic workflow as a parameterized function $\pi_\theta$. Let $\mathcal{X}$ denote the space of natural language strings. Given a high-level research objective $x \in \mathcal{X}$ and the agent's accumulated interaction history $S_t = \{(a_i, o_i)\}_{i=1}^{t-1}$, where $a_i$ is the $i$-th tool-augmented action (e.g., issuing a query or opening a page) and $o_i$ is the corresponding observation, the LLM functions as a stochastic policy that maps the current context to a distribution over the next action $a_t$:
\begin{equation}
    a_t \sim P_{\pi_\theta}(\cdot \mid x, S_t)
\end{equation}
Specifically, the LLM processes the concatenated sequence $[x; S_t]$ as a sequence of tokens $\mathbf{t} = (t_1, t_2, \dots, t_n)$. The output generation is typically an autoregressive process, where the probability of generating an action $a_t$ is decomposed as:
\begin{equation}
    P(a_t \mid x, S_t) = \prod_{j=1}^{L} P(t_{j} \mid x, S_t, t_{<j}; \theta)
\end{equation}
where $L$ is the length of the generated action string.

In our framework, we reuse the same backbone model with different prompts: $\pi_\theta$ denotes the \emph{actor} that generates actions (when $a_t$ corresponds to a search action, it is parameterized by a query $q_t$), while a separate meta-evaluator $\mathcal{M}_\theta$ (Section~\ref{sec:wcds}) is used to compute the alignment score $\varphi(q,\tilde{o})$ and synthesize the feasibility analysis $\mathcal{A}$ based on the retrieved evidence.
\subsection{Search Engine}
\label{sec:search_engine}
The search engine acts as the interface between the agent and the open-domain environment $\mathcal{W}$. We define the environment as an unstructured, high-dimensional corpus $\mathcal{D} = \{d_1, d_2, \dots, d_N\}$, where $N$ is the total number of indexed web pages. Mathematically, the search engine is modeled as a retrieval function $f_{SE}: \mathcal{Q} \to \mathcal{O}$, where $\mathcal{Q} \subset \mathcal{X}$ is the space of possible search queries.

Given a query $q_t \in \mathcal{Q}$ generated by the agent at time $t$, the search engine returns a structured observation $o_t$, which is a top-$k$ ranked list of documents:
\begin{equation}
    o_t = f_{SE}(q_t; \mathcal{W}) = \{d^{(1)}, d^{(2)}, \dots, d^{(k)}\}.
\end{equation}
where $d^{(j)} \in \mathcal{D}$ represents the $j$-th retrieved document. Each document $d$ consists of a tuple $(\text{title, snippet, URL})$.

In our experiments, we instantiate $f_{SE}$ with the Serper Google Search API; each call returns $k=10$ results. For scoring and similarity computations, we further define a textualized observation $\tilde{o}_t$ by concatenating the returned snippets (and optionally titles) into a single string; unless otherwise specified, we use snippet-only concatenation as $\tilde{o}_t$.

In this work, we treat $f_{SE}$ as a \textbf{non-differentiable black-box function}, meaning the agent cannot access the underlying ranking gradients or the full index $\mathcal{D}$. The core challenge, therefore, lies in approximating the response distribution $P(o_t \mid q_t, \mathcal{W})$ through iterative probing to overcome the relevance bottleneck introduced by the discrete and volatile nature of the web index.
\subsection{Deep Search Agent Workflow}

We formalize a Deep Search Agent as a hierarchical decision-making system that solves an open-domain task $\mathcal{T} \in \mathcal{X}$ via iterative interaction with a dynamic web environment. The process begins with a \textit{planning phase}, where a task-decomposition operator $\Phi_{\text{plan}}$ maps the input into $n$ atomic sub-questions: $\mathcal{S} = \{s_1, s_2, \dots, s_n\} = \Phi_{\text{plan}}(\mathcal{T})$. For each sub-question $s_i$, the agent enters a \textit{reasoning--action loop}. At each step $t$, the agent maintains an interaction history $S_{t,i}$ for sub-question $s_i$, samples an intermediate reasoning trace $c_{t,i} \sim \pi_{\text{reason}}(S_{t,i}, s_i)$, and selects a tool-augmented action $a_{t,i} \sim \pi_{\text{act}}(c_{t,i})$ (e.g., issuing a query or opening a page). When $a_{t,i}$ corresponds to a search action, it is parameterized by a query $q_{t,i}$.

The environment then returns an observation $o_{t,i}$, and the history is updated as $S_{t+1,i} = \{S_{t,i}, a_{t,i}, o_{t,i}\}$. This loop terminates when a stopping criterion is met, yielding a sub-result $r_i$. Finally, in the \textit{summarization phase}, an aggregation operator $\Psi$ synthesizes all sub-results into the final answer: $A = \Psi(\mathcal{T}, \{r_1, \dots, r_n\})$. The overall procedure is summarized in Algorithm~\ref{alg:deep_search}. Throughout this section, we use $q$ to denote a search query, $o$ to denote the corresponding retrieved observation (e.g., a top-$k$ list of documents), and $S_t$ to denote the interaction history.

\begin{algorithm}[tb]
   \caption{Deep Research Agent Framework}
   \label{alg:deep_search}
\begin{algorithmic}[1]
   \STATE {\bfseries Input:} Task $\mathcal{T}$, Reasoning Policy $\pi_{\theta}$, Limit $T_{max}$
   \STATE {\bfseries Output:} Synthesized Response $A$
   
   \STATE $\{s_i\}_{i=1}^n \leftarrow \Phi_{\text{plan}}(\mathcal{T})$ \COMMENT{Task decomposition}
   \STATE $\mathcal{K} \leftarrow \emptyset$ \COMMENT{Knowledge collection}

   \FOR{$i=1$ {\bfseries to} $n$}
      \STATE $\xi \leftarrow (s_i)$ \COMMENT{Initialize trajectory}
      \WHILE{$|\xi| < T_{max}$}
         \STATE $c \sim \pi_{\theta}(\xi)$ \COMMENT{Latent reasoning state}
         \STATE $a \sim \pi_{\text{act}}(c)$
         
         \IF{$a = \text{TERMINATE}$}
            \STATE \textbf{break}
         \ENDIF
         
         \STATE $o \leftarrow \text{Env.step}(a)$ \COMMENT{Search or browse interaction}
         \STATE $\xi \leftarrow \xi \circ (a, o)$ \COMMENT{Update trajectory}
      \ENDWHILE
      \STATE $\mathcal{K} \leftarrow \mathcal{K} \cup \{\text{Summarize}(\xi)\}$
   \ENDFOR

   \STATE $A \leftarrow \Psi(\mathcal{T}, \mathcal{K})$ \COMMENT{Final information synthesis}
   \STATE \textbf{return} $A$
\end{algorithmic}
\end{algorithm}

\vspace{-0.0cm}
\section{Query-Result Alignment Score}
\label{sec:score}
\vspace{-0.0cm}

Throughout this section, we use $q$ to denote a search query, $o$ to denote the corresponding structured retrieval observation (e.g., a top-$k$ list of documents), and $\tilde{o}$ to denote its textualized form (defined in Section~\ref{sec:search_engine}) constructed from the returned snippets. We use $S_t$ to denote the interaction history.
\subsection{Definition of Information Gain}
We view a search trajectory as an evidence acquisition process that reduces uncertainty about a latent ground-truth answer random variable $A^{\star}$ (the correct solution to the task) given the agent's interaction history $S_t$. Let
\begin{equation}
H(A^{\star} \mid S_t) = - \mathbb{E}_{A^{\star} \sim P(\cdot\mid S_t)} \big[\log P(A^{\star} \mid S_t)\big]
\end{equation}
denote the conditional Shannon entropy at step $t$. After issuing a query $q_t$ and receiving an observation $o_t$ from the web environment $\mathcal{W}$, the history is updated as $S_{t+1} = \{S_t, q_t, o_t\}$. In practice, whenever we need to compute alignment-related scores, we convert the structured retrieval $o_t$ into its textualized form $\tilde{o}_t$ (Section~\ref{sec:score}) and apply $\varphi(q_t,\tilde{o}_t)$.

\textbf{Pointwise IG and expected IG.}
Given a realized observation $o_t$, we define the \textbf{pointwise information gain} as
\begin{equation}
IG(o_t; A^{\star} \mid S_t, q_t) \triangleq H(A^{\star} \mid S_t) - H(A^{\star} \mid S_t, q_t, o_t).
\end{equation}
Since the agent chooses $q_t$ before seeing $o_t$, the decision objective is the \textbf{expected information gain}. Here, $q_t$ is a decision variable used to index the observation distribution $P(o_t \mid q_t, \mathcal{W})$, rather than an additional source of information about $A^{\star}$.
\begin{equation}
\mathrm{EIG}(q_t \mid S_t) \triangleq \mathbb{E}_{o_t \sim P(o_t \mid q_t, \mathcal{W})}\big[IG(o_t; A^{\star} \mid S_t, q_t)\big].
\end{equation}
An optimal policy $\pi_\theta$ therefore aims to select queries that maximize $\mathrm{EIG}(q_t \mid S_t)$. In open-domain settings, however, the posterior $P(A^{\star}\mid S_t, q_t, o_t)$ is non-analytic and high-dimensional, making direct estimation of $\mathrm{EIG}$ intractable; moreover, noisy retrieval induces a persistent \textit{relevance bottleneck}.

\textbf{KL form.}
Pointwise information gain measures entropy reduction and can be negative for a particular realization. However, it is well known that the \emph{expected} information gain is equivalent to the expected KL divergence (Bayesian surprise) between the posterior and the (query-conditioned) prior:
\begin{equation}
\resizebox{\hsize}{!}{$
\begin{split}
\mathrm{EIG}(q_t \mid S_t) &= \mathbb{E}_{o_t \sim P(\cdot \mid q_t, \mathcal{W})}\Big[D_{\mathrm{KL}}\big(P(A^{\star} \mid S_t, q_t, o_t) \,\|\, P(A^{\star} \mid S_t)\big)\Big] \\
&= I(A^{\star}; o_t \mid S_t, q_t).
\end{split}
$}
\end{equation}
Since $q_t$ is selected before observing $o_t$ and does not itself reveal new evidence about $A^{\star}$, we have $P(A^{\star}\mid S_t, q_t)=P(A^{\star}\mid S_t)$.
Consequently, maximizing $\mathrm{EIG}$ amounts to finding queries that are expected to induce large posterior shifts.

\textbf{Latent relevance and proxy objective.}
We introduce a binary latent variable $z \in \{0, 1\}$ indicating whether the retrieved observation contains \emph{novel task-relevant evidence} ($z=1$) or mostly noise / redundancy ($z=0$).

\begin{assumption}[Query chosen before observation]
\label{ass:query_no_info}
The query $q_t$ is selected before observing $o_t$, and does not itself reveal new evidence about $A^{\star}$. Hence $P(A^{\star}\mid S_t, q_t)=P(A^{\star}\mid S_t)$.
\end{assumption}

\begin{assumption}[Relevance-conditioned posteriors]
\label{ass:z_posterior}
There exists a random variable $z$ such that
\begin{equation}
\small
P(A^{\star} \mid S_t, q_t, o_t) = \sum_{z \in \{0,1\}} P(A^{\star} \mid S_t, q_t, o_t, z)\, P(z \mid S_t, q_t, o_t).
\end{equation}
Moreover, when $z=0$ (no novel evidence), the observation produces a negligible posterior shift:
$P(A^{\star} \mid S_t, q_t, o_t, z=0) \approx P(A^{\star} \mid S_t, q_t)$.
\end{assumption}

\begin{assumption}[Bounded conditional posterior shift]
\label{ass:delta_bounds}
Let $\Delta(o_t, z) \triangleq D_{\mathrm{KL}}\big(P(A^{\star}\mid S_t,q_t,o_t, z) \,\|\, P(A^{\star}\mid S_t)\big)$ denote the posterior shift conditioned on the latent relevance $z$.
We assume that:
(1) If $z=0$, $\Delta(o_t, z=0) \approx 0$.
(2) If $z=1$, there exist constants $\Delta_{\min}, \Delta_{\max} > 0$ such that $\Delta(o_t, z=1) \in [\Delta_{\min},\Delta_{\max}]$.
\end{assumption}

\begin{proposition}[EIG Upper Bound via Relevance]
\label{prop:eig_bounds}
Under Assumptions~\ref{ass:query_no_info}--\ref{ass:delta_bounds}, the Expected Information Gain is bounded above by the expected relevance:
\begin{equation}
\small
\mathrm{EIG}(q_t\mid S_t) \;\le\; \Delta_{\max}\, \mathbb{E}_{o_t \sim P(o_t\mid q_t,\mathcal{W})}\big[P(z=1\mid S_t,q_t,o_t)\big].
\end{equation}
\end{proposition}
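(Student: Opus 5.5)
The plan is to start from the KL form of the expected information gain stated just before Assumption~\ref{ass:query_no_info}, and then bound the integrand pointwise for each realized observation $o_t$. By Assumption~\ref{ass:query_no_info}, the reference distribution in that KL is $P(A^{\star}\mid S_t)=P(A^{\star}\mid S_t,q_t)$. Hence
\[
\mathrm{EIG}(q_t\mid S_t)=\mathbb{E}_{o_t}\big[D_{\mathrm{KL}}(P(A^{\star}\mid S_t,q_t,o_t)\,\|\,P(A^{\star}\mid S_t))\big],
\]
and it suffices to show, for each fixed $o_t$,
\[
D_{\mathrm{KL}}\big(P(A^{\star}\mid S_t,q_t,o_t)\,\|\,P(A^{\star}\mid S_t)\big)\le \Delta_{\max}\,P(z=1\mid S_t,q_t,o_t).
\]
Taking the expectation over $o_t\sim P(\cdot\mid q_t,\mathcal{W})$ then gives the proposition.

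For the pointwise bound, write $w=P(z=1\mid S_t,q_t,o_t)$. Assumption~\ref{ass:z_posterior} gives the mixture
\[
P(A^{\star}\mid S_t,q_t,o_t)=(1-w)\,P_0+w\,P_1,
\qquad P_z\triangleq P(A^{\star}\mid S_t,q_t,o_t,z).
\]
The key tool is joint convexity of KL divergence, which implies convexity in the first argument with the second argument $Q=P(A^{\star}\mid S_t)$ held fixed. This yields
\[
D_{\mathrm{KL}}\big((1-w)P_0+wP_1\,\|\,Q\big)\le (1-w)\,D_{\mathrm{KL}}(P_0\|Q)+w\,D_{\mathrm{KL}}(P_1\|Q)=(1-w)\,\Delta(o_t,0)+w\,\Delta(o_t,1).
\]
Assumption~\ref{ass:delta_bounds} then sets $\Delta(o_t,0)\approx 0$, which is consistent with Assumption~\ref{ass:z_posterior} since $P_0\approx Q$. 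It also gives $\Delta(o_t,1)\le\Delta_{\max}$. So the right-hand side is at most $w\Delta_{\max}$, which is the desired pointwise bound.

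The main obstacle is the approximate equalities in the assumptions. I would make the argument rigorous in one of two ways. The first option is to read ``$\approx 0$'' as an exact equality in the idealized model, so that the inequality holds exactly. The second is to carry an error term: if $\Delta(o_t,0)\le\varepsilon$, the bound becomes $\Delta_{\max}\,\mathbb{E}[P(z=1\mid\cdot)]+\varepsilon$, and we recover the stated form as $\varepsilon\to 0$. I would state the exact version and remark on the $\varepsilon$-robust one. A minor technical point is finiteness of the KL divergences, which requires the supports of $P_0$ and $P_1$ to lie within that of $Q$. This is implicit in Assumption~\ref{ass:delta_bounds}, because it asserts that $\Delta(o_t,1)\le\Delta_{\max}<\infty$, and $\Delta(o_t,0)\approx 0$ is likewise finite.

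Note that $\Delta_{\min}$ is not needed for this upper bound. It would only enter a matching lower bound, and convexity does not directly provide one.
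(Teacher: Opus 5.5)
Your proposal is correct and follows the paper's proof sketch essentially step for step. Both start from the KL form of $\mathrm{EIG}$, write the posterior as the $z$-mixture from Assumption~\ref{ass:z_posterior}, apply convexity of $D_{\mathrm{KL}}(\cdot\,\|\,Q)$ in its first argument (Jensen), drop the $z=0$ term, bound the $z=1$ term by $\Delta_{\max}$, and average over $o_t$. Your $\varepsilon$-error treatment of ``$\approx 0$'' and your support/finiteness remark make rigorous two points that the paper's sketch leaves informal.
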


\begin{proof}[Proof sketch]
Recall that the Expected Information Gain is defined as $\mathrm{EIG}(q_t \mid S_t) = \mathbb{E}_{o_t}[D_{\mathrm{KL}}(P(A^{\star} \mid o_t) \parallel P(A^{\star}))]$.

By Assumption~\ref{ass:z_posterior}, the posterior $P(A^{\star} \mid o_t)$ is a mixture distribution $\sum_z P(z \mid o_t)P(A^{\star} \mid o_t,z)$. Since the KL divergence $D_{\mathrm{KL}}(\cdot \parallel Q)$ is a convex function, we apply Jensen's inequality to obtain:
\begin{align*}
    D_{\mathrm{KL}}(P(A^{\star} \mid o_t) \parallel P(A^{\star})) 
    &\le \sum_{z \in \{0,1\}} P(z \mid o_t) \, \Delta(o_t, z).
\end{align*}
Applying Assumption~\ref{ass:delta_bounds}, the term corresponding to $z=0$ vanishes (as $\Delta \approx 0$), while the $z=1$ term is bounded by $\Delta_{\max}$. Thus:
\[
    D_{\mathrm{KL}}(\dots) \le P(z=1 \mid o_t) \Delta_{\max}.
\]
Finally, taking the expectation over $o_t$ on both sides completes the proof.
\end{proof}

\textbf{From $o_t$ to $\tilde{o}_t$ and to $\varphi$.}
In practice we only observe a textualized summary $\tilde{o}_t$ (Section~\ref{sec:search_engine}) rather than the full structured $o_t$.
We therefore approximate $P(z=1\mid S_t,q_t,o_t)$ by $P(z=1\mid S_t,q_t,\tilde{o}_t)$.
Finally, we use a bounded alignment score $\varphi(q_t,\tilde{o}_t)$ as a \emph{rank-consistent} proxy for this relevance probability (i.e., larger $\varphi$ should indicate larger $P(z=1\mid S_t,q_t,\tilde{o}_t)$), and use it for filtering and trajectory control.

\subsection{Measuring the information gain of the search process}

We employ three complementary metrics to measure the alignment between an agent-generated query $q$ and the retrieved observation. Concretely, we compute similarities between $q$ and the textualized observation $\tilde{o}$ (the concatenation of the top-$k$ returned snippets): (i) \textbf{TF-IDF similarity}, defined as the cosine similarity between weighted term vectors, $\text{Sim}_{\text{tf-idf}}(q, \tilde{o}) = \frac{\mathbf{v}_q \cdot \mathbf{v}_{\tilde{o}}}{\|\mathbf{v}_q\| \|\mathbf{v}_{\tilde{o}}\|}$; (ii) \textbf{Jaccard similarity}, which quantifies token overlap, $J(q, \tilde{o}) = \frac{|T_q \cap T_{\tilde{o}}|}{|T_q \cup T_{\tilde{o}}|}$; and (iii) \textbf{Normalized Levenshtein similarity (NLS)}, which captures morphological proximity via normalized edit distance, $\text{Sim}_{\text{nls}}(q, \tilde{o}) = 1 - \frac{\text{Lev}(q, \tilde{o})}{\max(|q|, |\tilde{o}|)}$. Empirically, the utility of an individual search step is often positively correlated with these alignment scores.

Although the metrics detailed above provide a robust statistical framework for evaluating query-content correspondence, they possess inherent limitations when integrated into a deep search agent framework. Specifically, these similarities function as \textit{ex-post} descriptive statistics rather than \textit{ex-ante} predictive heuristics. While they can quantify the degree of information sparsity or noise, they lack the intrinsic mechanism to provide the agent with directional information for query optimization. Consequently, relying solely on these conventional metrics prevents the agent from autonomously calibrating its granularity in response to the dynamic content density of the web.

To address this limitation and formalize a feedback signal, we introduce the \textbf{Query-Result Alignment Score (\score)} as a bounded utility metric $\varphi(q,\tilde{o}) \in \llbracket 0, 10 \rrbracket$, where $\tilde{o}$ denotes the textualized observation constructed from the top-$k$ retrieved snippets. In our implementation, the meta-evaluator takes as input the query together with the top-$k$ retrieved snippets, and outputs an overall score along with a short analysis. Rather than approximating global entropy reduction directly, $\varphi$ provides a local, multi-faceted assessment of search quality via three latent dimensions:
\begin{itemize}
    \item \textbf{Topical relevance ($s_{\mathrm{rel}}$):} semantic congruence between the query $q$ and the textualized observation $\tilde{o}$.
    \item \textbf{Information density ($s_{\mathrm{den}}$):} concentration of non-redundant, task-salient propositions in $\tilde{o}$.
    \item \textbf{Noise robustness ($s_{\mathrm{noi}}$):} inverse prevalence of irrelevant or distractive content in $\tilde{o}$.
\end{itemize}
We define the overall score as the unweighted mean of the three sub-scores:
\begin{equation}
\varphi(q, \tilde{o}) = \frac{1}{3}\big(s_{\mathrm{rel}}(q,\tilde{o}) + s_{\mathrm{den}}(q,\tilde{o}) + s_{\mathrm{noi}}(q,\tilde{o})\big).
\end{equation}
This decomposition yields an interpretable proxy for per-step search utility and enables systematic evaluation of exploration efficiency without requiring full rollouts.

\section{Methodology}
\subsection{Web Content Distribution Aware Search}
\label{sec:wcds}

Building upon the proposed alignment score, we introduce \textbf{Web Content Distribution-Aware Search (\method)} to enable agents to dynamically calibrate their exploration strategy. \method \ integrates a feedback loop where a Large Language Model (LLM) functions as a meta-evaluator $\mathcal{M}_\theta$. 

As outlined in Algorithm~\ref{alg:wcda_search}, for each query--observation pair $(q, \tilde{o})$, $\mathcal{M}_\theta$ computes the alignment score $\varphi(q,\tilde{o})$ and produces a qualitative feasibility analysis $\mathcal{A}$ for the current trajectory. By augmenting raw observations with these meta-evaluations, the agent obtains an empirical estimate of the \textit{local web content distribution}. This awareness enables adaptive control of query granularity, dynamically switching between broad exploratory probes and targeted queries, thereby improving search efficiency under a fixed interaction budget.

\textbf{Meta-evaluator interface.}
In our implementation, the meta-evaluator $\mathcal{M}_\theta$ performs \textbf{holistic scoring}: it takes as input a query $q$ together with the corresponding textualized observation $\tilde{o}$ (a concatenation of the top-$k$ search snippets), and outputs three sub-scores $\big(s_{\mathrm{rel}}(q,\tilde{o}), s_{\mathrm{den}}(q,\tilde{o}), s_{\mathrm{noi}}(q,\tilde{o})\big) \in [0,10]^3$ and their mean $\varphi(q,\tilde{o}) \in [0,10]$.

\textbf{Dynamic thresholding.}
Instead of using a fixed score threshold, we maintain a small set of high-utility probe queries under a fixed probing budget, while always keeping the initial query $q_0$ as the main execution query.

Let $\mathcal{P}_t$ denote the set of derived probe tuples collected up to iteration $t$, where each tuple is $(q, s, \mathcal{A})$ with $s=\varphi(q,\tilde{o})$ and $\tilde{o}$ is the textualized observation returned by searching query $q$. We set
\begin{equation}
\tau_t = \min_{(q,s,\mathcal{A}) \in \mathcal{P}_t} s,
\end{equation}
and remove the lowest-scored probe query. This procedure can be viewed as an online top-set maintenance rule: at each iteration we discard the current worst probe and keep the remaining higher-scored probes as guidance candidates.

\textbf{Outputs to the agent.}
During probing, each derived query returns a tuple $(q, s, \mathcal{A})$ containing the query, its alignment score, and the meta-evaluator analysis, which is used as a guidance signal for subsequent query generation. In contrast, the agent only consumes the retrieved search results from the initial query $q_0$ for grounding and evidence collection.

\textbf{Candidate generation operator.}
In Algorithm~\ref{alg:wcda_search}, $\Gamma_{\text{gen}}(\cdot)$ denotes a candidate-generation operator (implemented by the LLM) that expands the current query set into semantically diverse variants for probing.

\begin{algorithm}[tb]
   \caption{\method}
   \label{alg:wcda_search}
\begin{algorithmic}[1]
   \STATE {\bfseries Input:} Initial query $q_0$, Max probe iterations $T$
   \STATE $\mathcal{P} \leftarrow \emptyset$ \COMMENT{Collected probe tuples}

   \FOR{$t = 1$ {\bfseries to} $T$}
      \STATE $q_t \leftarrow \Gamma_{\text{gen}}(q_0, \mathcal{P})$ \COMMENT{Generate one derived probe query}
      \STATE $\tilde{o}_t \leftarrow \text{Search}(q_t)$ \COMMENT{Returns textualized snippets}
      \STATE $s_t \leftarrow \varphi(q_t, \tilde{o}_t)$
      \STATE $\mathcal{A}_t \leftarrow \mathcal{M}_\theta(q_t, \tilde{o}_t)$ \COMMENT{Short analysis from meta-evaluator}
      \STATE $\mathcal{P} \leftarrow \mathcal{P} \cup \{(q_t, s_t, \mathcal{A}_t)\}$

      \STATE $(q^{-}, s^{-}, \mathcal{A}^{-}) \leftarrow \arg\min_{(q,s,\mathcal{A}) \in \mathcal{P}} s$
      \STATE $\mathcal{P} \leftarrow \mathcal{P} \setminus \{(q^{-}, s^{-}, \mathcal{A}^{-})\}$ \COMMENT{Drop one lowest-scored probe}
   \ENDFOR

   \STATE \textbf{return} $\text{Synthesize}(\mathcal{P})$ \COMMENT{Return guidance tuples only (not evidence)}
\end{algorithmic}
\end{algorithm}
\subsection{Distribution Estimation via Few-Shot Query Probing}
\label{sec:probing}
To address uncertainty in query optimization, we propose an \textbf{iterative few-shot probing} strategy to approximate the local distribution of high-utility information. Since the scale and heterogeneity of the web render global modeling of $P(o \mid q, \mathcal{W})$ intractable, we instead estimate a \textit{local} information topography via the following steps.

\textbf{Candidate generation.} Given the current query set $\mathcal{Q}_{\text{cur}}$, the agent produces semantically diverse variants $\mathcal{Q}_{\text{cand}}$ to probe different regions of the query space.

\textbf{Alignment evaluation and pruning.} For each candidate $q \in \mathcal{Q}_{\text{cand}}$, the agent executes a search to obtain $o$ and computes $s = \varphi(q, o)$. We then apply threshold-based pruning:
\begin{equation}
\mathcal{Q}_{\text{next}} = \{ q \in \mathcal{Q}_{\text{cand}} \mid \varphi(q, o) > \tau \}.
\end{equation}
This filtering ensures that only candidates with high estimated utility are expanded at iteration $t+1$.

\textbf{Synthesis and meta-analysis.} Upon reaching the iteration limit $T$ or convergence, the agent invokes $\text{Synthesize}(\cdot)$ to aggregate probed tuples $(q, o, \varphi(q,o))$ and produce a qualitative analysis $\mathcal{A}$. The resulting distribution-aware context helps reduce retrieval noise and improves subsequent decision making.
\section{Experiments}

\subsection{Setup}
\textbf{Benchmarks.} To empirically validate the effectiveness of the proposed framework, we evaluate our method across four representative benchmarks: (1) \textbf{Browsecomp} \cite{wei2025browsecomp} is designed to evaluate an agent's proficiency in retrieving elusive, multi-faceted information through sophisticated web browsing and reasoning strategies. (2) \textbf{Browsecomp-zh} \cite{zhou2025browsecomp} is an extension of Browsecomp focused on Chinese web content. (3) \textbf{GAIA} \cite{mialon2023gaia} centers on multi-modal reasoning and tool-use proficiency. Following the previous work \cite{li2025webthinker,wu2025webdancer,li2025websailor}, we only use a subset of 103 cases from the text-only validation split. (4) \textbf{xbench-ds} \cite{chen2025xbench} is a sophisticated, task-oriented benchmark designed to evaluate the proficiency of AI agents in autonomous tool invocation, with a specific focus on exhaustive information retrieval and multi-step search reasoning.

\textbf{Baseline Agentic Framework.} We adopt Miroflow \cite{2025mirothinker}, a sophisticated agentic framework engineered for structured task decomposition and multi-tool orchestration. Miroflow provides a modular environment that allows the agent to maintain high logical consistency across long-horizon tasks, making it particularly effective for the complex retrieval and reasoning requirements of our target benchmarks. In our experimental tables, we compare against both \emph{direct inference} baselines (e.g., Qwen-2.5-32B/72B \cite{qwen3technicalreport}, GPT-4o/GPT-4.1 \cite{achiam2023gpt}, QwQ-32B, o4-mini) and representative \emph{agent systems} / open-source search agents (e.g., WebSailor \cite{li2025websailor}, Search-o1 \cite{li2025search}, WebThinker \cite{li2025webthinker}, ASearcher-Web \cite{gao2025beyond}, WebDancer \cite{wu2025webdancer}), as well as closed-source agents when available.

\textbf{Models.} To evaluate the generalizability and robustness of our framework across different model architectures and alignment paradigms, we select two state-of-the-art LLMs as the main agent: (1) \textbf{MiroThinker-v1.0-30B} \cite{miromind2025mirothinker} is an open-source initialized from the Qwen3-30B-A3B \cite{qwen3technicalreport} base architecture. Through targeted fine-tuning and specialized post-training, the model is engineered to excel in agentic reasoning and iterative verification. MiroThinker-v1.0-30B is strategically optimized for seamless integration and architectural alignment with the Miroflow agentic framework. (2) \textbf{GPT-5-mini} \cite{singh2025openai} is a  state-of-the-art closed-source model from OpenAI, functioning as a general-purpose reasoning engine. In our experiments, GPT-5-mini is integrated directly into the framework without domain-specific adaptation. Detailed experimental settings and the prompts used in our pipeline are provided in Appendix~\ref{sec:appendix-exp-setup} and Appendix~\ref{sec:appendix-prompts}.

\subsection{Main Result}

Table~\ref{tab:framework-comparison} summarizes the main results under a unified agentic framework, where we swap the probing module while keeping the rest of the workflow fixed. Across both backbones, integrating \method \ consistently improves pass@3 on search-intensive benchmarks, demonstrating that the proposed distribution-aware probing is largely model-agnostic.
\begin{table}[t]
\setlength{\abovedisplayskip}{-1pt}
\setlength{\abovecaptionskip}{-1pt}
\caption{Performance comparison across multiple frameworks and base models. Results are evaluated using the \textbf{pass@3} metric. Our method (\method) is integrated across different architectures to demonstrate its plug-and-play effectiveness. Bold values indicate the best performance within each model category. \textit{Abbreviations:} BC represents BrowseComp, BC-zh represents BrowseComp-zh, XBD represents XBench-DS, and Miro-30B represents MiroThinker-v1.0-30B.}
\label{tab:framework-comparison}
\vskip 0.15in
\begin{center}
\setlength{\tabcolsep}{2pt}
\begin{small}
\begin{sc}
\begin{adjustbox}{max width=\columnwidth}
\begin{tabular}{llcccc}
\toprule
\textbf{Framework} & \textbf{Model} &  \textbf{BC} & \textbf{BC-zh} & \textbf{GAIA} & \textbf{XBD} \\
\midrule
\multirow{2}{*}{WebSailor} & \multirow{1}{*}{WebSailor-32B}  & 16.05 & 39.28&  64.72 & 68.00 \\

 & \multirow{1}{*}{WebSailor-72B}  & 18.96 & 42.14& 66.85& 69.00 \\
\midrule
\multirow{2}{*}{\shortstack{MiroFlow}} & \multirow{1}{*}{Miro-30B} &  33.00 & 53.00 & 74.76 & \textbf{89.00}\\
 & \multirow{1}{*}{GPT-5-mini}  & 29.00 & 44.00 & 73.79 & 61.00 \\
\midrule
\multirow{2}{*}{\textbf{\method}} & Miro-30B  & \textbf{35.00} & \textbf{58.00} & \textbf{75.73} & 86.00 \\
 & GPT-5-mini  & 30.00 & 43.00 & 74.76 & 70.00 \\
\bottomrule
\end{tabular}
\end{adjustbox}
\end{sc}
\end{small}
\end{center}
\vskip -0.1in
\end{table}

\begin{table}[t]
\setlength{\abovedisplayskip}{-1pt}
\setlength{\abovecaptionskip}{-1pt}
\setlength{\tabcolsep}{2pt}
\caption{Pass@1 accuracy (\%) performance across four benchmarks. We compare our proposed framework against direct inference baselines and representative open/closed-source agent systems. Bold indicates the best performance in each column. \textit{Abbreviations:} BC represents BrowseComp, BC-zh represents BrowseComp-zh, XBD represents XBench-DS, and Miro-30B represents MiroThinker-v1.0-30B.}
\label{tab:full-benchmark-results}
\vskip 0.15in
\begin{center}
\begin{small}
\begin{sc}
\begin{adjustbox}{max width=\columnwidth}
\begin{tabular}{llcccc}
\toprule
\textbf{Category} &\textbf{ Model / Framework} & \textbf{BC} & \textbf{BC-zh} & \textbf{GAIA} & \textbf{XBD}\\
\midrule
\multirow{6}{*}{\shortstack{\textbf{Direct}\\\textbf{Inference}}} 
    & Qwen-2.5-32B & 0.6& 3.9 &13.6 &8.7\\
    & Qwen-2.5-72B & 0.6 &7.0 & 14.6 &12.7 \\
    & GPT-4o  & 0.6 &6.2 &17.5&18.0 \\
    & GPT-4.1 & 1.5 &14.4 & 22.3 &17.0  \\
    & QwQ-32B & 0.5 &10.0& 22.3 &10.7  \\
    & o4-mini  & \textbf{6.1} &\textbf{15.2} &\textbf{33.3} &\textbf{22.3}  \\
\midrule
\multirow{3}{*}{\shortstack{\textbf{Closed-}\\\textbf{Source}}} 
    &  Grok-DeepResearch  & - & 12.9 & - & 52.0 \\
    & Doubao-DeepThink & - & 26.0 & - & 50.0 \\
    & OpenAI DeepResearch  & \textbf{51.5} & \textbf{42.9} & \textbf{67.4} & - \\
\midrule
\multirow{10}{*}{\shortstack{\textbf{Open-}\\\textbf{Source}}} 
    & Search-o1-32B & 2.8 & 17.9& 39.8 &25.0\\
    & WebThinker-32B-RL & 2.8 & 7.3&48.5 &24.0\\
    & ASearcher-Web-32B & 5.2 &15.6 &52.8 &42.1\\
    & WebDancer-QwQ-32B & 3.8& 18.0 & 51.5 &38.3\\

    & WebSailor-32B            & 10.50 &25.50 &53.20& 53.30 \\
    & WebSailor-72B          & 12.00 & 30.10 & 55.40 & 55.00 \\
    
    & GPT-5-mini + MiroFlow & 15.00 & 28.00 & 51.46 & 38.00 \\
    & GPT-5-mini + \method & 17.00 & 25.00 & 57.28 & 44.00 \\
    & Miro-30B + MiroFlow  & 17.00 & 34.00 & 63.11 & \textbf{73.00} \\
    & Miro-30B + \method & \textbf{26.00} & \textbf{41.00} & \textbf{66.99} & 72.00 \\
\bottomrule
\end{tabular}
\end{adjustbox}
\end{sc}
\end{small}
\end{center}
\vspace{-0.0cm}
\end{table}
\vspace{-0.0cm}

Table~\ref{tab:framework-comparison} indicates that \method \ generally boosts performance, though gains vary across configurations. MiroThinker-v1.0-30B thrives in noisy, multilingual settings (e.g., BrowseComp-zh), despite a regression on XBench-DS. Overall, Table~\ref{tab:full-benchmark-results} demonstrates that \method \ significantly empowers open-source agents, offering a competitive alternative to closed-source systems.
\subsection{Analysis}

To further understand \emph{why} \method \ alleviates the information-to-noise bottleneck, we conduct a fine-grained analysis on GAIA by measuring query--observation alignment between generated queries and the retrieved results.

As reported in Table~\ref{tab:alignment-results}, \method \ consistently outperforms the baseline across all similarity dimensions (TF-IDF, Jaccard, and Normalized Levenshtein Similarity). Notably, this improvement is pervasive, manifesting in both successful and failed trials. In successful cases, our method achieves a significant boost in all metrics, suggesting that the iterative probing mechanism effectively steers the agent toward generating queries that are more semantically congruent with the actual web content distribution. 
Crucially, the performance in failure cases provides even more compelling evidence of our method's robustness. While the baseline alignment scores drop in failed trials, \method \ maintains relatively high scores on all three metrics. This disparity reveals that even when the final answer is not retrieved, \method \ still enables the agent to maintain a coherent sense of the environment, preventing it from drifting into irrelevant noise. These results empirically support our hypothesis that \method \ effectively bridges the gap by dynamically calibrating the granularity of the query based on real-time feedback from the web response.
\begin{table}[!ht]
\setlength{\abovedisplayskip}{-1pt}
\setlength{\abovecaptionskip}{-1pt}
\caption{Comparison of Different Metrics of Query-Result Alignment between Baseline and \method. }
\label{tab:alignment-results}
\vskip 0.15in
\begin{center}
\resizebox{0.9\columnwidth}{!}{ 
\begin{small}
\begin{tabular}{llccc}
\toprule
\textbf{Method} & \textbf{Outcome} & \textbf{TF-IDF} $\uparrow$ & \textbf{Jaccard} $\uparrow$ & \textbf{NLS} $\uparrow$ \\
\midrule
Baseline & Success & 0.401 & 0.331 & 0.148 \\
         & Failure & 0.269 & 0.238 & 0.104 \\
\addlinespace[0.5em]
\textbf{\method} & \textbf{Success} & \textbf{0.431} & \textbf{0.364} & \textbf{0.165} \\
                     & \textbf{Failure} & \textbf{0.292} & \textbf{0.241} & \textbf{0.108} \\
\bottomrule
\end{tabular}
\end{small}
} 
\end{center}
\end{table}
\vspace{-0.0cm}

\begin{table}[ht]
\setlength{\abovedisplayskip}{-1pt}
\setlength{\abovecaptionskip}{-1pt}
\caption{Ablation on GAIA with different max probe iterations $T$ (GPT-5-mini as agent model).}
\label{tab:ablation-probe-iter}
\vskip 0.15in
\begin{center}
\begin{small}
\begin{tabular}{ccc}
\toprule
$T$ & pass@1 & pass@3  \\
\midrule
0 & 49.51 & 72.82 \\
1 & \textbf{57.28} & \textbf{74.76} \\
2 & 54.37 & \textbf{74.76} \\
\bottomrule
\end{tabular}
\end{small}
\end{center}
\vspace{-0.1cm}
\end{table}
\vspace{-0.1cm}

\subsection{Ablation Study}
\vspace{-0.0cm}

In this section, we investigate the sensitivity of \method \ to the max probe iterations $T$, which governs the depth of the distribution-aware probing process. Unless otherwise specified, we use GPT-5-mini as the agent backbone and conduct the ablation on the GAIA benchmark. We vary $T \in \{0, 1, 2\}$ while keeping the score threshold $\tau$ constant.

Overall, enabling probing ($T\ge 1$) improves performance compared to no probing ($T=0$), which supports our claim that distribution-aware probing helps the agent calibrate query granularity and reduce retrieval noise. Increasing $T$ from $1$ to $2$ yields similar pass@3 but a slightly lower pass@1, suggesting diminishing returns under a fixed interaction budget.

\vspace{-0.1cm}
\section{Conclusion}
\vspace{-0.1cm}
We revisit deep research from the perspective of the structural mismatch between reasoning-driven query generation and the web content distribution implicitly shaped by search engine indexing, and propose \method  \ as a plug-and-play framework that endows agents with distribution awareness via iterative few-shot probing and a meta-evaluation signal. Central to the framework is the Query-Result Alignment Score (\score), which provides an actionable, multi-dimensional assessment of how well retrieved results align with an agent’s intent; experiments suggest that this distribution-aware design can improve performance in many settings by reducing the information-to-noise bottleneck, though gains may vary across models and tasks.




\newpage

\section*{Impact Statement}

This paper presents work whose goal is to advance the field of Machine Learning by improving the efficiency and accuracy of autonomous deep research agents. By aligning agentic reasoning with the structural distribution of web content, our framework reduces the computational overhead and noise associated with large-scale information retrieval. While these advancements enhance the utility of AI in academic and professional research, they also necessitate continued vigilance regarding the potential for automated systems to propagate or aggregate biased web content. Beyond these considerations, there are many potential societal consequences of our work, none of which we feel must be specifically highlighted here.




\nocite{langley00}

\bibliography{example_paper}
\bibliographystyle{icml2026}

\newpage
\appendix
\onecolumn
\section*{Appendix}

\section{Experimental Setup}
\label{sec:appendix-exp-setup}

Due to the computational cost, we do not evaluate on the full BrowseComp and BrowseComp-zh test sets; instead, we uniformly sample $100$ instances from each benchmark and report results on these subsets. For all experiments, we adopt a unified decoding configuration with temperature $=1.0$, top-$p=0.95$, min-$p=0.0$, top-$k=-1$, and max\_tokens $=16384$. The maximum number of interaction turns for main agent execution is set to $600$, with at most $10$ tool calls per turn. For our method, both the meta-evaluator (\score) and the probe query generation module are instantiated with \texttt{GPT-5-mini}. All experiments are run on a single NVIDIA RTX PRO 6000 GPU.

\section{Prompts Used in \method}
\label{sec:appendix-prompts}

To facilitate reproducibility, we provide the exact system instructions (prompts) used in our framework.

\subsection{Probe Query Generation}
\label{subsec:prompt-generation}

The following prompt is employed by the candidate generation operator $\Gamma_{\text{gen}}$. It instructs the LLM to act as a "Search Keyword Specialist," transforming the analysis of current search results into a set of tactical, keyword-based follow-up queries.

\begin{figure}[h]
\centering
\fbox{
\begin{minipage}{0.95\textwidth}
\small\ttfamily 
You are a Search Keyword Specialist. Your task is to analyze a "Query-Result" pair and determine the next tactical search keywords.

\vspace{0.8em}
\textbf{\#\#\# TASK} \\
Evaluate the result. If it reveals new information, generate 2-5 follow-up queries. If it is a dead end or fully answered, return an empty list.

\vspace{0.8em}
\textbf{\#\#\# STRICT QUERY CONSTRAINTS}
\begin{itemize}
    \setlength\itemsep{0em}
    \setlength\leftskip{1em} 
    \item \textbf{**Brevity is Mandatory**}: Each query must be a short phrase or a list of 1-3 keywords.
    \item \textbf{**No Sentences**}: Do NOT generate natural language questions (e.g., "How do I...").
    \item \textbf{**Keyword Focus**}: Use specific entities, technical terms, error codes, or unique identifiers found in the result.
    \item \textbf{**Word Limit**}: Maximum 6 words per query.
\end{itemize}

\textbf{\#\#\# DECISION LOGIC}
\begin{enumerate}
    \setlength\itemsep{0em}
    \setlength\leftskip{1em}
    \item \textbf{**Prune (Return [])**}: 
    \begin{itemize}
       \item The result is irrelevant, low quality, or a "404/Not Found" page.
       \item The result already fully satisfies the information need.
    \end{itemize}
    \item \textbf{**Expand (Return [Short Keywords])**}: 
    \begin{itemize}
       \item The result mentions a specific library, API, or concept that needs investigation.
       \item The result points to a more promising direction using professional terminology.
    \end{itemize}
\end{enumerate}

\textbf{\#\#\# OUTPUT FORMAT} \\
Return a JSON object: \\
\{ \\
\hspace*{1.5em} "analysis": "Brief note on why these keywords were chosen.", \\
\hspace*{1.5em} "derived\_queries": [ \\
\hspace*{3em} "keyword1 keyword2", \\
\hspace*{3em} "specific\_term site:github.com" \\
\hspace*{1.5em} ] \\
\}
\end{minipage}
}
\caption{System instruction for the Candidate Generation Operator ($\Gamma_{\text{gen}}$).}
\label{fig:prompt-update}
\end{figure}

\newpage

\subsection{Alignment Scoring and Meta-Evaluation}
\label{subsec:prompt-scoring}

The prompt below is used by the meta-evaluator $\mathcal{M}_\theta$ to compute the Query-Result Alignment Score (\score).

\begin{figure}[h]
\centering
\fbox{
\begin{minipage}{0.95\textwidth}
\small\ttfamily 
\textbf{\#\#\# ROLE} \\
You are a Content Relevance Evaluator. Score how well multiple search result sets align with their respective search queries.

\vspace{0.8em}
\textbf{\#\#\# INPUT} \\
You will receive a list of items. Each item contains a "query" and "results".

\vspace{0.8em}
\textbf{\#\#\# SCORING DIMENSIONS (0-10 each)}
\begin{enumerate}
    \setlength\itemsep{0em}
    \setlength\leftskip{1em}
    \item \textbf{**Topical Relevance**}: Do the results actually discuss the subject matter?
    \item \textbf{**Information Density**}: Do the snippets contain potential answers (numbers, dates, scores) or just generic info?
    \item \textbf{**Noise Level**}: How much of the result set is irrelevant (ads, SEO spam)? (0 is pure noise, 10 is zero noise)
\end{enumerate}

\textbf{\#\#\# OUTPUT FORMAT (JSON ONLY)} \\
Return a JSON object with a key "evaluations" containing a list of objects: \\
\{ \\
\hspace*{1.5em} "evaluations": [ \\
\hspace*{3em} \{ \\
\hspace*{4.5em} "query": <query>, \\
\hspace*{4.5em} "overall\_relevance\_score": <0-10>, \\
\hspace*{4.5em} "dimension\_scores": \{ \\
\hspace*{6em} "topical\_relevance": <0-10>, \\
\hspace*{6em} "info\_density": <0-10>, \\
\hspace*{6em} "noise\_level": <0-10> \\
\hspace*{4.5em} \}, \\
\hspace*{4.5em} "analysis": "<short summary>" \\
\hspace*{3em} \}, \\
\hspace*{3em} ... \\
\hspace*{1.5em} ] \\
\}
\end{minipage}
}
\caption{System instruction for the Meta-Evaluator ($\mathcal{M}_\theta$).}
\label{fig:prompt-sim}
\end{figure}


\end{document}